% !TeX spellcheck = en_US
% !TeX program=pdflatex
% !BIB program=biber

%%%%%%%%%%%%%%%%%%%%%%%%%%%%%%%%%%%%%%%%%%%%%%%%%%%%%%%%%%%%%%%%%%%%%%%
\documentclass[twocolumn]{article}
\usepackage{authblk}
\usepackage{lmodern}  % Using Latin Modern fonts
\usepackage[T1]{fontenc} 

\usepackage[backend=biber,maxbibnames=99,giveninits=true,style=alphabetic,url=false,sorting=none]{biblatex}

%% Math
\usepackage{mathtools}
\usepackage{amsthm}
\usepackage{amssymb}
\theoremstyle{definition}
\newtheorem{definition}{Definition}
\newtheorem{theorem}{Theorem}
\newtheorem{lemma}{Lemma}
\newtheorem*{remark}{Remark}

%% Tables
\usepackage{multirow, booktabs}

%% Lists
\usepackage{enumerate}

\bibliography{string,references}%
% References

\usepackage{placeins}
% Links and hyperlinks
\usepackage{hyperref}
\hypersetup{
	breaklinks=true
}
 % already default

% Setup of referencing 
\renewbibmacro{in:}{}

\DeclareFieldFormat{pages}{#1}
% Remove the some fields for all other types of entries
 \AtEveryBibitem{\ifentrytype{}{}{\clearfield{month}}}
 \AtEveryBibitem{\clearfield{url}}
 \AtEveryBibitem{\ifentrytype{}{}{\clearfield{isbn}}}
 \AtEveryBibitem{\ifentrytype{}{}{\clearfield{issn}}}
 \AtEveryBibitem{\clearfield{address}}
 \AtEveryBibitem{\ifentrytype{article}{}{\clearfield{publisher}}}

 % % Redefine the bibmacro for volume+number+eid for articles only
\renewbibmacro*{volume+number+eid}{%
   \ifentrytype{article}{%
 	\textbf{\printfield{volume}}%
    \iffieldundef{eid}{}{\addcomma\printfield{eid}}%
      }
 }

\ExecuteBibliographyOptions{doi=false}
\DeclareFieldFormat{doilink}{%
\iffieldundef{doi}{#1}%
{\href{http://dx.doi.org/\thefield{doi}}{#1}}}

\DeclareBibliographyDriver{article}{%
  \usebibmacro{bibindex}%
  \usebibmacro{begentry}%
  \usebibmacro{author/translator+others}%
  \setunit{\labelnamepunct}\newblock
  \usebibmacro{title}%
  \newunit
  \printlist{language}%
  \newunit\newblock
  \usebibmacro{byauthor}%
  \newunit\newblock
  \usebibmacro{bytranslator+others}%
  \newunit\newblock
  \printfield{version}%
  \newunit\newblock
  \usebibmacro{in:}%
  \printtext[doilink]{%
  %\usebibmacro{journal+issuetitle}{%
  \printtext{%
    \printfield{journaltitle}%
    \setunit*{\addspace}%
    \mkbibbold{\printfield{volume}}%
    \setunit*{\addcomma\space}%
    \printfield{pages}%
    \setunit*{\space}%
    \printtext[parens]{}}%
  \newunit
  \usebibmacro{byeditor+others}%
  }%
  \newunit\newblock
  \iftoggle{bbx:isbn}
    {\printfield{issn}}
    {}%
  \newunit\newblock
  \usebibmacro{doi+eprint+url}%
  \newunit\newblock
  \usebibmacro{addendum+pubstate}%
  \setunit{\bibpagerefpunct}\newblock
  \usebibmacro{pageref}%
  \usebibmacro{finentry}}

% Replace with your title, authors and affiliation
\title{Bounds on the Generalization Error in Active Learning}
\author[1]{Vincent Menden}
\author[1]{Yahya Saleh\thanks{Corresponding author: yahya.saleh@uni-hamburg.de}}
\author[1]{Armin Iske}
\affil[1]{Department of Mathematics, Universität Hamburg, Bundesstr. 55, 20146, Hamburg, Germany}
\affil[ ]{\texttt{\{vincent.menden\}@studium.uni-hamburg.de}}
\affil[ ]{\texttt{\{yahya.saleh, armin.iske\}@uni-hamburg.de}}

\date{}
\begin{document}
\maketitle

\begin{abstract}
We establish empirical risk minimization principles for active learning by 
deriving a family of upper bounds on the generalization error. Aligning with empirical observations, the bounds suggest that superior query algorithms can be obtained by
combining both informativeness and representativeness query strategies, where the latter is assessed using integral probability metrics.
 To facilitate the use of these bounds in
application, we systematically link diverse active
learning scenarios, characterized by their loss functions and hypothesis
classes to their corresponding upper bounds. Our results show that
regularization techniques used to constraint the complexity of various hypothesis
classes are sufficient conditions to ensure the validity of the bounds.
The present work enables principled
construction and empirical quality-evaluation of query algorithms in active learning. 
\end{abstract}

\section{Introduction}

Empirical risk minimization principles are at the heart of statistical learning
theory. In addition to laying a formal mathematical foundation for supervised-learning algorithms, they lead to substantial advances in algorithmic design,
such as the development of max-margin methods~\cite{Vapnik:SLT1999,Mohri:FoundationsML:2018}. However, the majority of
empirical risk principles considered the standard passive supervised-learning
setting, and formal principles for other settings such as online or
semi-supervised learning are largely missing.

An important such setting is that of active learning (AL), where, similar to the
standard supervised-learning setting, computer oracles learn a probability
distribution that models a certain phenomenon given a finite set of observations. However, unlike in the standard
passive-learning setting,
the oracle in AL also selects an optimal, minimal set of observations to achieve this goal. Even in the age of big
data, numerous applications require this setting, mainly due to high
computational costs corresponding to the annotation, i.e., labeling of
datapoints~\cite{Settles:ActiveLearning:2009}. For example, in the emerging field of physics-informed neural
networks, it is often required to learn solutions or solution operators of
high-dimensional partial differential equations~\cite{Raissi:JCOP378:686,Kharazmi:arXiv1912:00873}. Generating the training data in
such learning tasks involve running computationally expensive numerical solvers. AL
is, indeed, a very appealing setting for such problems and has been extensively applied for, e.g., parameteric Schrödinger equations~\cite{Uteva:JCP149:174114, Saleh:JCP155:144109,Zhang:PRM3:023804}.

The crucial task in all AL scenarios is to \emph{query} the labels of the
most useful datapoints while minimizing the number of queries~\cite{Settles:ActiveLearning:2009}. The rationale
behind the design of such \emph{query algorithms} can be
divided into
two categories~\cite{Settles:ActiveLearning:2009, Dasgupta:TCS412:1767}. The
first category relies on the \emph{informativeness}
criterion~\cite{Freund:ANIPS1993:483,Seung:CLT5:287}, where the
query algorithm aims at selecting the most informative samples, whereby shrinking the
space of the candidate hypotheses as fast as possible. Such query algorithms indeed introduce a
sampling bias~\cite{Dasgupta:TCS412:1767}, as the selected training dataset is not necessarily
i.i.d. sampled from the true distribution. This renders the query algorithm
prone to oversampling outliers
that are not very representative of the application domain, where the model
would be employed~\cite{Kee:INSC454:401,
Settles:ActiveLearning:2009}. 
The second category is based on the \emph{representativeness} criterion, where the
query algorithm aims at selecting samples that are representative of the
patterns present in the unlabeled data. Such methods tend to perform well when
only a small labelled dataset is available, but their performance rather
deteriorate with increasing labeled-dataset size.
Numerous empirical and theoretical studies indeed point out that superior
query algorithms can be obtained by combining both
criteria~\cite{Dasgupta:TCS412:1767,Nguyen:ICML2004,Kee:INSC454:401}.

AL algorithms are often heuristic in designing the specific query criterion or
ad hoc in measuring the informativeness and representativeness of the samples.
Some first steps into a more principled
approach to AL were taken in~\cite{Wang:ACMTKDD9:23}, where the
authors derived an upper bound on the generalization error using the maximum mean
discrepancy (MMD) as a measure of the representativeness of a sample. Later, a
similar result was obtained using the Wasserstein distance as a measure of
representativeness~\cite{Shui:AISTATS2020:1308}.
However, these results assumed rather harsh conditions on the loss function and
the supervised-learning problem that restrict the applicability of these
upper bounds.

\textbf{Organization.} In \autoref{sec:preliminaries} we
cite the ERM principle in passive learning and introduce the notion of
integral probability metrics (IPMs). In
\autoref{ch3.1} we establish an ERM principle for AL. In \autoref{ch3.2} we link
the
upper bound in the ERM principle to two learning settings, employing linear models with the
$\ell_1$-loss function, and deep neural networks with the hinge loss,
respectively.

\section*{Notation}
On the probability measure space $(\Omega,\mathcal{A}, P)$ we consider the 
random vector $X: \Omega \to \mathbb{X} \subseteq
\mathbb{R}^n$ and the random variable $Y: \Omega \to \mathbb{Y} \subseteq
\mathbb{R}$. To simplify the terminology we refer to $X$ by a random variable
irrespective of the value of $n$. We set  
$Z = (X, Y)$ to be the joint random variable and denote by $P_Z$ its probability distribution on 
$\mathbb{Z}:=\mathbb{X} \times \mathbb{Y}$. We denote by $P_X$ the marginal
probability distribution and by $P_{Y|X}$ the conditional probability, i.e.,
$P_Z = P_X P_{Y|X}$. 

Given a random variable $Q: \Omega \to \mathbb{Q} \subseteq \mathbb{R}^n$, with
a distribution $P_Q$ and a measurable mapping $g: \mathbb{Q} \to \mathbb{R}$, we denote by
$\mathbb{E}_Q$ the expectation operator with respect to the distribution $P_Q$
and the random variable $Q$, i.e., $\mathbb{E}_{Q \sim P_Q}[g(Q)]$.

Throughout the paper we denote by 
$\mathfrak{H}$ a generic hypothesis class containing learners $h: \mathbb{X} \to
\mathbb{Y}$ and by $\ell:\mathbb{Y}^2 \to \mathbb{R}_{\geq 0}$ a generic loss
function that evaluates the deviation of a prediction $\hat{y} = h(x)$ from the
true label $y$. For such a loss function, a fixed $y\in \mathbb{Y}$ and a fixed
$h\in \mathfrak{H}$, we define $\ell^y:\mathbb{X}\to \mathbb{R}$ by $\ell^y(x):=\ell(y,h(x))$.

For a fixed $\mathfrak{H}$ and $\ell$ we denote by
$R_{\mathcal{P_Z}}(h)$ the true risk of a hypothesis $h\in \mathfrak{H}$ with
respect to $P_Z$, i.e.,
\begin{equation*}
{R_{\mathcal{P_Z}}(h)} := \int_{\mathbb{Z}} \ell(y, h(x)) \ d
P_Z(x,y). \end{equation*}
Given a dataset of finite observations $D_m:=\{z_1=(x_1, 
y_1),\dotsc,z_m=(x_m, y_m)\}$, we denote by $\hat{R}(h;D_m)$ the
empirical risk of the hypothesis
 $h$, i.e.,
\begin{align*}
  {\hat{R}(h;D_m)} := \frac{1}{m} \sum_{i=1}^m \ell(y_i, h(x_i)).
\end{align*}

Additionally, we define 
\begin{align*}
\mathcal{K} &:= \ell \circ \mathfrak{H} \circ D_m \\
  &:= \{\ell(y_i, h(x_i)): h\in \mathfrak{H}, (x_i, y_i) \in D_m\}.
\end{align*}

Finally, for a vector $v\in \mathbb{R}^n$ we denote by $\|v\|_2$ the standard
2-norm, i.e., $\|v\|_2=\sum_{i=1}^n \sqrt{w_i^2}$. Similarly, we set $\|v\|_1 = \sum_{i=1}^n|v_i|$ and for a matrix $M\in \mathbb{R}^{n\times m}$ we consider the spectral-2-norm $\|M\|_2:=\sup_{\|v\|_2=1}\|Mv\|_2$.
For compact sets $A\subset\mathbb{R}^n$ we set $M_A:=\max_{a\in A}\|a\|_2$.
\section{Preliminaries}
\label{sec:preliminaries}
In standard supervised learning, the unachievable goal of minimizing the true
risk is replaced by minimizing the empirical risk over a finite sample, while
imposing constraints on the complexity of the hypothesis class, often using
regularization techniques. 

Formally, this common practice in supervised learning can be understood as an
inductive principle, where the minimization of the true risk is replaced by the
minimization of an upper bound to it. Such upper bounds exist in a variety of
forms, often involving different notions of complexity of the
hypothesis class~\cite{Vapnik:SLT1999,Mohri:FoundationsML:2018, Shalev:ML:2014}. As an example, we cite the following celebrated result. 
\begin{theorem}
  \label{thm:ERM_PL}
  Assume that $\ell(y,h(x)) \leq k 
$ for some $k>0$, any $h \in \mathfrak{H}$ and any $(x,y) \in \mathbb{Z}$. Then, for any $\delta>0$, any $h \in \mathfrak{H}$, and some $c>0$, with probability of at least $1-\delta$ over
  the choice of the training set $D_m$ it holds that
  \begin{align}
    \label{eq:ERM_PL}
    R_{P_Z}(h)\leq &\hat{R}_{D_m\sim P_Z}(h)+
    2 \ \text{Rad}( \mathcal{K}) \nonumber  \\ 
    &+c \ \sqrt{\frac{2\log(\frac{4}{\delta})}{m}},
  \end{align}
  where $\text{Rad}(\mathcal{K})$ is the
  Rademacher complexity defined by 
  \begin{equation*} 
    \text{Rad}(\mathcal{K}) := \mathbb{E}_{\sigma}\left[\sup_{k\in \mathcal{K}}\frac{1}{m}\sum_{i=1}^m \sigma_i k(x_i)\right].
  \end{equation*}     
\end{theorem}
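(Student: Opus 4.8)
The plan is to follow the classical uniform-convergence route: reduce the per-hypothesis inequality to a one-sided supremum over $\mathfrak{H}$, then control that supremum by a concentration step followed by symmetrization. Concretely, set $\Phi(D_m) := \sup_{h\in\mathfrak{H}}\bigl(R_{P_Z}(h)-\hat{R}(h;D_m)\bigr)$. Since $R_{P_Z}(h)-\hat{R}(h;D_m)\le \Phi(D_m)$ for every fixed $h\in\mathfrak{H}$, it suffices to derive a high-probability upper bound on $\Phi(D_m)$; rearranging that bound then yields \eqref{eq:ERM_PL} uniformly in $h$.

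First I would apply McDiarmid's bounded-differences inequality to $\Phi$. Because $0\le \ell\le k$, replacing a single sample $z_i=(x_i,y_i)$ in $D_m$ by any other admissible $z_i'$ changes each empirical risk, and hence $\Phi(D_m)$, by at most $k/m$; so the bounded-differences constants are all $k/m$. McDiarmid then gives, for any $\delta'>0$, with probability at least $1-\delta'$ over $D_m$,
\[
\Phi(D_m)\le \mathbb{E}\bigl[\Phi(D_m)\bigr]+k\sqrt{\tfrac{\log(1/\delta')}{2m}}.
\]

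Next I would bound $\mathbb{E}[\Phi(D_m)]$ by symmetrization. Introducing an independent ghost sample $D_m'=\{z_1',\dots,z_m'\}$, writing $R_{P_Z}(h)=\mathbb{E}_{D_m'}[\hat{R}(h;D_m')]$, pulling the supremum inside via Jensen, and inserting i.i.d.\ Rademacher signs $\sigma_i$ (valid since $\ell(y_i,h(x_i))-\ell(y_i',h(x_i'))$ is symmetric under swapping $z_i\leftrightarrow z_i'$) gives $\mathbb{E}[\Phi(D_m)]\le 2\,\mathbb{E}_{D_m}\mathbb{E}_\sigma\bigl[\sup_{h\in\mathfrak{H}}\tfrac1m\sum_{i=1}^m\sigma_i\ell(y_i,h(x_i))\bigr]$, i.e.\ twice the \emph{expected} Rademacher complexity of $\mathcal{K}$. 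To replace this by the data-dependent quantity $\text{Rad}(\mathcal{K})$ written in the statement, I would apply McDiarmid once more to the map $D_m\mapsto \mathbb{E}_\sigma[\sup_{h}\tfrac1m\sum_i\sigma_i\ell(y_i,h(x_i))]$, which again has bounded differences $k/m$, and union-bound the two failure events with $\delta'=\delta/2$. Collecting the $\sqrt{\log(\cdot)/m}$ terms and absorbing all absolute constants and the factor $k$ into a single $c>0$ produces \eqref{eq:ERM_PL}, with $\log(4/\delta)$ appearing inside the root because of the two-event union bound.

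I do not anticipate a real obstacle, as this is a textbook statement; the only points needing care are (i) the bookkeeping of the bounded-differences constants, which is exactly where boundedness of $\ell$ by $k$ enters, and (ii) the symmetrization step, where one must keep track that the Rademacher complexity is taken over the loss-composed class $\mathcal{K}=\ell\circ\mathfrak{H}\circ D_m$ rather than over $\mathfrak{H}$, and that $\mathcal{K}$ is here identified with the corresponding set of vectors $(\ell(y_i,h(x_i)))_{i=1}^m\in\mathbb{R}^m$. For a self-contained write-up one would additionally record the one-line Jensen/Fubini justification for exchanging the ghost-sample expectation with the supremum.
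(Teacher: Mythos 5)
Your proposal is correct and is essentially the proof the paper relies on: the paper simply cites \cite[Theorem 26.5]{Shalev:ML:2014}, whose argument is exactly your route of McDiarmid's inequality applied to the uniform deviation, symmetrization with a ghost sample to obtain the Rademacher complexity of $\mathcal{K}$, a second McDiarmid application to pass to the empirical Rademacher complexity, and a union bound producing the $\log(4/\delta)$ term with constants absorbed into $c$.
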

\begin{proof}
  See~\cite[Theorem. 26.5]{ Shalev:ML:2014}.
\end{proof}
Minimizing the upper bound in \eqref{eq:ERM_PL} was shown to be equivalent to
common supervised-learning practices across a variety of loss functions and
hypothesis classes. Moreover, such upper bounds were shown to accommodate novel
statistical behaviors, such as the generalization error of deep neural networks~\cite{Neyshabur:arXiv1805}.

Similar to the standard supervised-learning setting, the goal in AL
is to find a hypothesis of $h \in \mathfrak{H}$ that minimizes the true risk.
However, to achieve this goal, the oracle in AL is required to select a minimal
set of observations. This often violates the
passive-learning assumption that the training data is i.i.d. sampled from the true
distribution. Generally, the training data $D$ in AL follows the distribution
$
P_{\hat{Z}}:=P_Q \ P_{Y|X},
$
i.e., it shares the same conditional distribution as the true distribution
$P_Z$, but has a different marginal distribution $P_Q$. The choice of an optimal
query algorithm can, thus, be framed as finding an optimal marginal distribution $P_Q$.

The representativeness criterion in AL can be understood as the requirement that
$P_Q$ does not deviate too much from the true marginal $P_X$. To quantify this
deviation, we use the notion of IPM~\cite{Muller:AAP29:429}. 

\begin{definition}[Integral Probability Metrics]
Consider the measure space $(\mathbb{X}, \mathcal{B}(\mathbb{X}))$ where
$\mathcal{B}(\mathbb{X})$ denotes the Borel $\sigma$-algebra generated by
$\mathbb{X}\subset \mathbb{R}^n$. Further, let $\mathcal{F}\subseteq
\mathcal{B}_C$ with $\mathcal{B}_C$ the set of real-valued measurable functions
on $\mathbb{X}$, which are bounded by $C>0$.
Then, for two probability measures $P_X$ and $P_Q$ on $(\mathbb{X},
\mathcal{B}(\mathbb{X}))$ we define the integral probability metric as 
\begin{align}
  \label{eq:IPM}
d_{\mathcal{F}}(P_X, P_Q):=\sup_{f\in \mathcal{F}} \ \Bigl|&\int_{\mathbb{X}}f(x)dP_X(x) \nonumber \\&-\int_{\mathbb{X}}f(q)dP_Q(q)\Bigr|
\end{align}
\end{definition}
Choosing different generator classes $\mathcal{F}$ in \eqref{eq:IPM} leads to
different statistical distances. We consider the following two generators:
\begin{itemize}
  \item[(1)] The \textit{Total Variation metric} ($d_{\mathcal{F}_{\text{TV}}}$) is obtained by considering \begin{equation*}\mathcal{F}_{\text{TV}}:=\{f:\mathbb{X}\rightarrow
  \mathbb{R}:\left\lVert f\right\rVert_{\infty}\leq 1\},\end{equation*} 
  where $\left\lVert f\right\rVert_{\infty}$ denotes the supremum norm.
  \item[(2)]The \textit{Kantorovic metric} ($d_{\mathcal{F}_{K}}$) is obtained by
  considering \begin{equation*}\mathcal{F}_{K}:=\{f:\mathbb{X}\rightarrow
  \mathbb{R}:\left\lVert f\right\rVert_{L}\leq 1\},\end{equation*}
  where 
  $$\left\lVert f\right\rVert_{L}:= \sup  \ \left\{\frac{|f(x)-f(y)|}{\|x-y\|_2}: x\neq
  y, x,y \in S\right\}$$ 
  denotes the Lipschitz semi-norm on a metric space $(S,
  \rho)$. 
\end{itemize}

To establish the ERM principle for AL, we need the following concept.
\begin{definition}[Maximal Generator]
Let $\mathcal{F}\subseteq \mathcal{B}_C$ be a generator. We define the set of
maximal generators $\mathcal{R}_{\mathcal{F}}$ to be the set of functions $f\in\mathcal{B}_C$ with the
property
\begin{equation*}\label{eqmax}
\Bigl|\int_{\mathbb{X}}f(x)dP_X(x)-\int_{\mathbb{X}}f(q)dP_Q(q)\Bigr| \leq d_{\mathcal{F}}(P_X,P_Q),
\end{equation*}
for all probability measures $P_X$ and $P_Q$ on $(\mathbb{X},
\mathcal{B}(\mathbb{X}))$. 
\end{definition}
In other words, $\mathcal{R}_{\mathcal{F}}$ describes the largest set in
$\mathcal{B}_C$ preserving the value of $d_{\mathcal{F}}(\cdot,\cdot)$. It is
clear that $\mathcal{F}\subset \mathcal{R}_{F}$.
\begin{lemma}\label{lemma1}
Let $(\mathbb{Y},\mathcal{B}(\mathbb{Y}), P)$ be a probability space, $\mathcal{F}\subset\mathcal{B}_C$ a generator and $f:\mathbb{Y}\times\mathbb{X}\rightarrow \mathbb{R}$ a 
$\mathcal{B}(\mathbb{Y}\times\mathbb{X})$-measurable function with $f(y,\cdot)\in \mathcal{F}\subset \mathcal{B}_C$ for all $y\in\mathbb{Y}$.
Then
\begin{equation*}
g(\cdot):=\int_{\mathbb{Y}}f(y,\cdot)dP(y)
\end{equation*}
is a well-defined function on $\mathbb{X}$ and it holds that $g\in\mathcal{R}_{\mathcal{F}}$.
\end{lemma}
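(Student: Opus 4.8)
The plan is to verify the two assertions separately: first that $g$ is a well-defined element of $\mathcal{B}_C$, and then that it satisfies the defining property of $\mathcal{R}_{\mathcal{F}}$.

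For well-definedness I would fix $x\in\mathbb{X}$ and note that the section $y\mapsto f(y,x)$ is $\mathcal{B}(\mathbb{Y})$-measurable because $f$ is jointly measurable, and that $|f(y,x)|\leq C$ for every $y$ since $f(y,\cdot)\in\mathcal{F}\subseteq\mathcal{B}_C$. As $P$ is a probability measure, the integral defining $g(x)$ therefore exists, is finite, and obeys $|g(x)|\leq C$; measurability of $x\mapsto g(x)$ follows from the standard theorem that the $\mathbb{Y}$-marginal of a bounded jointly measurable function is measurable. Hence $g\in\mathcal{B}_C$, so it makes sense to ask whether $g\in\mathcal{R}_{\mathcal{F}}$.

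For the membership $g\in\mathcal{R}_{\mathcal{F}}$ I would take arbitrary probability measures $P_X$ and $P_Q$ on $(\mathbb{X},\mathcal{B}(\mathbb{X}))$ and examine the difference $\int_{\mathbb{X}}g\,dP_X-\int_{\mathbb{X}}g\,dP_Q$. Since $f$ is bounded and jointly measurable and the product measures $P\otimes P_X$ and $P\otimes P_Q$ are finite, Fubini's theorem applies and lets me interchange the order of integration, giving $\int_{\mathbb{X}}g\,dP_X=\int_{\mathbb{Y}}\bigl(\int_{\mathbb{X}}f(y,x)\,dP_X(x)\bigr)dP(y)$ and likewise for $P_Q$. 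Subtracting, moving the absolute value inside the outer $\mathbb{Y}$-integral by the triangle inequality, and using that $f(y,\cdot)\in\mathcal{F}$ so the inner quantity is at most $d_{\mathcal{F}}(P_X,P_Q)$ for every $y$, I obtain $\bigl|\int_{\mathbb{X}}g\,dP_X-\int_{\mathbb{X}}g\,dP_Q\bigr|\leq\int_{\mathbb{Y}}d_{\mathcal{F}}(P_X,P_Q)\,dP(y)=d_{\mathcal{F}}(P_X,P_Q)$, the last step because $P(\mathbb{Y})=1$. As $P_X$ and $P_Q$ were arbitrary, this is exactly the condition defining $\mathcal{R}_{\mathcal{F}}$, so $g\in\mathcal{R}_{\mathcal{F}}$.

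The only step needing any care is the invocation of Fubini's theorem, namely checking joint measurability, $\sigma$-finiteness, and integrability; but all of this is immediate from the uniform bound $|f|\leq C$ together with the fact that every measure in sight is a probability measure, and the remainder of the argument is a short chain of elementary inequalities. I do not expect a genuine obstacle here.
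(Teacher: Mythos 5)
Your proof is correct and complete: both the well-definedness argument (joint measurability plus the uniform bound $|f|\leq C$ and $P(\mathbb{Y})=1$) and the Fubini-then-triangle-inequality estimate establishing $\bigl|\int g\,dP_X-\int g\,dP_Q\bigr|\leq d_{\mathcal{F}}(P_X,P_Q)$ are sound. The paper does not prove this lemma itself but delegates it to Theorem 3.4 of M\"uller's work on integral probability metrics, and your argument is exactly the standard proof of that result, so no discrepancy arises.
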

\begin{proof}
See~\cite[Theorem 3.4]{Muller:AAP29:429}.
\end{proof}
Note that \autoref{lemma1} also holds for any $f\in \mathcal{R}_{\mathcal{F}}$. The stage is now ready to state our results.
\section{ERM in Active Learning}
We begin by establishing the ERM principle for AL, where the IPM is
used as a measure of representativeness.

\subsection{Bounding the True Risk}\label{ch3.1}
We recall that the training data in AL is assumed to follow a distribution
$P_{\hat{Z}}$ that shares the same conditional distribution of
the generating distribution $P_Z$, i.e., $P_{\hat{Z}}=P_Q P_{Y|X}$. Further,
recall that a given a loss
function $\ell:\mathbb{Y}^2 \to \mathbb{R}_{\geq 0}$ induces the function
$\ell^{y}:\mathbb{X}\to \mathbb{R}$ by $\ell^{y}(x):=\ell(y,h(x))$ for some $y \in
\mathbb{Y}$ and $h\in \mathfrak{H}$.
  
\begin{theorem}[ERM principle for AL]
\label{thm:ERM_AL}
Let $\mathcal{F}\subset\mathcal{B}_C$ be a generator for some $C>0$, and
$\ell:\mathbb{Y}^2\rightarrow\mathbb{R}_{\geq 0}$ be a loss function that
satisfies the hypothesis of \autoref{thm:ERM_PL}. Further, let
$\ell^y \in\mathcal{F}$ for all $y\in\mathbb{Y}$ and $h\in\mathfrak{H}$ and $\hat{D}_m =\{\hat{Z}_1,\dotsc\, \hat{Z}_m\}\sim
P_{\hat{Z}}$
be an i.i.d sample. Then, with probability of at least $1-\delta$ and for any
$h\in \mathfrak{H}$, we have
\begin{equation}
\begin{split}
R_{P_Z}(h) &\leq \hat{R}_{\hat{D}_m\sim P_{\hat{Z}}}(h) + d_{\mathcal{F}}(P_X,P_Q)\\
&+ 2 \ \text{Rad} (l\circ \mathfrak{H} \circ \hat{D}_m)\\ 
&+ c\sqrt{\frac{2\log(\frac{4}{\delta})}{m}}.
\end{split}
\end{equation}
\end{theorem}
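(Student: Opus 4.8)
The plan is to interpolate between the true risk $R_{P_Z}(h)$ and the empirical risk $\hat{R}_{\hat D_m}(h)$ through the intermediate quantity $R_{P_{\hat Z}}(h) := \int_{\mathbb Z}\ell(y,h(x))\,dP_{\hat Z}(x,y)$, i.e., the true risk computed against the \emph{shifted} distribution that the AL sampler actually draws from. Writing
\[
R_{P_Z}(h) = \bigl(R_{P_Z}(h) - R_{P_{\hat Z}}(h)\bigr) + \bigl(R_{P_{\hat Z}}(h) - \hat{R}_{\hat D_m}(h)\bigr) + \hat{R}_{\hat D_m}(h),
\]
the second bracket is controlled by the passive ERM bound of \autoref{thm:ERM_PL}: since $\hat D_m$ is an i.i.d.\ sample from $P_{\hat Z}$ and $\ell$ satisfies the same boundedness hypothesis, that theorem applied with $P_Z$ replaced by $P_{\hat Z}$ gives $R_{P_{\hat Z}}(h) - \hat{R}_{\hat D_m}(h) \le 2\,\mathrm{Rad}(\ell\circ\mathfrak H\circ\hat D_m) + c\sqrt{2\log(4/\delta)/m}$ with probability at least $1-\delta$. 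So two of the three terms in the claimed bound come for free.

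The substance is bounding the first bracket, the distribution-shift term, by $d_{\mathcal F}(P_X,P_Q)$. Here I would use that both $P_Z$ and $P_{\hat Z}$ factor with the \emph{same} conditional $P_{Y|X}$, so
\[
R_{P_Z}(h) - R_{P_{\hat Z}}(h) = \int_{\mathbb X}\!\!\left(\int_{\mathbb Y}\ell(y,h(x))\,dP_{Y|X}(y\mid x)\right)\!dP_X(x) - \int_{\mathbb X}\!\!\left(\cdots\right)\!dP_Q(x),
\]
where the inner integral is the same function of $x$ in both terms. Strictly, one should be a little careful: the natural object is $\tilde g(x) := \int_{\mathbb Y}\ell(y,h(x))\,dP_{Y\mid X}(y\mid x)$, and to invoke \autoref{lemma1} one wants to realize this as $\int f(y,x)\,dP(y)$ for a fixed probability measure $P$ on $\mathbb Y$ with $f(y,\cdot)=\ell^y\in\mathcal F$. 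If $P_{Y|X}$ genuinely depends on $x$ this is not literally an integral against a fixed $P$; the cleanest route is to apply \autoref{lemma1} (and the remark that it extends to $\mathcal R_{\mathcal F}$) with the marginal $P_Y$ and a suitably disintegrated integrand, or to note that $\tilde g$ is a pointwise limit / measurable selection of convex combinations of functions $\ell^y\in\mathcal F$ and hence lies in $\mathcal R_{\mathcal F}$ because $\mathcal R_{\mathcal F}$ is closed under such operations by its defining inequality. Either way, the conclusion is $\tilde g\in\mathcal R_{\mathcal F}$, and then by the definition of the maximal generator set,
\[
\bigl|R_{P_Z}(h)-R_{P_{\hat Z}}(h)\bigr| = \Bigl|\int \tilde g\,dP_X - \int \tilde g\,dP_Q\Bigr| \le d_{\mathcal F}(P_X,P_Q).
\]

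Assembling: drop the absolute values (we only need the one-sided inequality), add the three pieces, and absorb the probability statement from the single application of \autoref{thm:ERM_PL}. I expect the main obstacle to be exactly the measurability/disintegration bookkeeping in the previous paragraph — making precise that the $x$-dependent average of the loss over $P_{Y|X}$ falls under the scope of \autoref{lemma1}; everything else is routine once that is granted. A minor secondary point is confirming that the hypothesis ``$\ell^y\in\mathcal F$ for all $y,h$'' is used in the right place (it is what lets the integrand enter $\mathcal R_{\mathcal F}$) and that the boundedness constant $k$ of $\ell$ matches the constant $C$ bounding $\mathcal B_C$, so that the two hypotheses are mutually consistent.
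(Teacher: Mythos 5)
Your proposal is correct and follows essentially the same route as the paper: decompose via the intermediate risk $R_{P_{\hat Z}}(h)$, apply \autoref{thm:ERM_PL} to the i.i.d.\ sample from $P_{\hat Z}$, and bound the shift term $R_{P_Z}(h)-R_{P_{\hat Z}}(h)$ by $d_{\mathcal F}(P_X,P_Q)$ through \autoref{lemma1} applied to $g(x)=\int_{\mathbb Y}\ell(y,h(x))\,dP_{Y|X}(y)$. The disintegration subtlety you flag (that \autoref{lemma1} is stated for a fixed measure on $\mathbb Y$ while $P_{Y|X}$ may depend on $x$) is real, but the paper's own proof passes over it silently, so you have if anything been more careful than the source.
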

\begin{proof}
  We note that the hypothesis of this theorem satisfies the conditions of
  \autoref{thm:ERM_PL}. Therefore, it follows that
  \begin{equation}
  \begin{split}
  R_{P_Z}(h)\leq & R_{P_Z}(h)-R_{P_{\hat{Z}}}(h)\\
  &+ \hat{R}_{\hat{D}_m\sim P_{\hat{Z}}}(h)\\
  &+2 \ \text{Rad}(l\circ \mathfrak{H} \circ \hat{D}_m)\\
  &+c\sqrt{\frac{2\log(\frac{4}{\delta})}{m}}.
  \end{split}
  \end{equation}
  Set $K(h):=R_{P_Z}(h)-R_{P_{\hat{Z}}}(h)$ and note that
  \begin{equation*}
  \begin{split}
  K(h) = &\int_{\mathbb{X}}\int_{\mathbb{Y}} l(y, h(x)) dP_{Y|X}(y)dP_X(x) \\
  &- \int_{\mathbb{X}}\int_{\mathbb{Y}} l(y, h(x)) dP_{Y|X}(y)dP_Q(x) \\
  =& \int_{\mathbb{X}}\int_{\mathbb{Y}} l(y, h(x)) dP_{Y|X}(y)dP_X(x) \\
  &- \int_{\mathbb{X}}\int_{\mathbb{Y}} l(y, h(x)) dP_{Y|X}(y)dP_Q(x)
  \end{split}
  \end{equation*}
  by virtue of Fubini's theorem.
  Set 
  \begin{equation}
    \label{eq:g}
    g:=\int_{\mathbb{Y}} l(y, h(\cdot)) dP_{Y|X}(y)  
  \end{equation}
  
   and note that $\ell(y,h(\cdot))=\ell^y$ satisfies all the hypotheses of
   \autoref{lemma1} and hence $g\in \mathcal{R}_\mathcal{F}$.
   Thus, using the definition of $\mathcal{R}_{\mathcal{F}}$ we can estimate
  \begin{equation*}
  \begin{split}
  D(h) &= \int_{\mathbb{X}}g(x)dP_X(x) - \int_{\mathbb{X}}g(x)dP_Q(x) \\
  &\leq \sup_{f\in\mathcal{R}_{\mathcal{F}}} \ \Bigl|\int_{\mathbb{X}}f(x)dP_X(x) - \int_{\mathbb{X}}f(x)dP_Q(x)\Bigr| \\
  &=\sup_{f\in\mathcal{F}} \ \Bigl|\int_{\mathbb{X}}f(x)dP_X(x) - \int_{\mathbb{X}}f(x)dP_Q(x)\Bigr| \\
  &= d_{\mathcal{F}}(P_X,P_Q). 
  \end{split}
  \end{equation*}
  \normalsize
  \end{proof}
  \begin{remark} We note that an upper bound on the true risk in AL using the
  IPM appeared in the work of~\cite{Wang:ACMTKDD9:23}. However, to derive their
  result the authors
 made a direct assumption on $g$, see \eqref{eq:g}. A more refined
  version appeared in the work of~\cite{Saleh:thesis:2023}, where the author
  derived direct conditions on the loss function $\ell$ that would reduce the
  IPM to the Kantorovic metric and the MMD. \autoref{thm:ERM_AL} can be considered as a
  more general formulation of these results that allow a direct connection to the
  literature on maximal generators. 
  \end{remark}  

\autoref{thm:ERM_AL} establishes an empirical risk principle, which is in accordance with common practices in AL. To see this consider
a classification task and assume that
the AL oracle has access to a hypothesis class $\mathfrak{H}$, an
initially labelled dataset $D^{(0)}\sim P_Z$ and a pool of unlabeled data that
is i.i.d. sampled from $P_X$. The upper bound suggests finding a hypothesis $h$ and
sampling an additional dataset $D^{(1)}$ that minimize the empirical risk. A
certain hypothesis $h$ that minimizes the empirical risk on $D^{(0)}$ would
benefit the most from a dataset $D^{(1)}$ that is close to the decision
boundary. This corresponds to the concept of informativeness sampling in AL.
In addition, the upper bound in the theorem suggests that a query strategy should
sample points, whose distribution is close to the true marginal distribution of
the data. In other words, an optimal query strategy should sample points that
are representative of the underlying marginal. Indeed, a balance between these
two criteria is crucial for the success of an AL query algorithm~\cite{Dasgupta:TCS412:1767,Nguyen:ICML2004,Kee:INSC454:401}. 
\subsection{Mapping Learning Settings to Generalization Bounds}\label{ch3.2}
The upper bound derived in \autoref{thm:ERM_AL} is generic and can take many
forms by choosing different generator classes $\mathcal{F}$. We aim in this
section at deriving explicit bounds for the true risk given a certain learning
setting. We consider the learning setting to be
determined by a choice of the hypothesis class $\mathfrak{H}$ and the loss function
$\ell$. In the following
$\mathbb{X}\subset\mathbb{R}^n$ and $\mathbb{Y}\subset\mathbb{R}$ unless otherwise specified.

We consider first a regression task employing the linear hypothesis class
\begin{equation*}
  \mathfrak{H}_L:=\{h:\mathbb{X}\rightarrow\mathbb{Y}:h(x)=w^Tx+b, w\in\mathbb{R}^n,b\in \mathbb{R} \},
  \end{equation*}
  where $w$ and $b$ are the learnable parameters along with the loss
  $\ell_1(y,h(x)):=|y-h(x)|$ defined for any $y\in \mathbb{Y}$ and $h\in
  \mathfrak{H}_L$.
  
\begin{theorem}[Linear Hypothesis Classes]\label{thm:linearhc}
  Consider a regression problem employing $\mathfrak{H}_L$ and the
 $\ell_1$-loss. Assume that $w$ is such that $\|w\|_2\leq 1$. Then
      the true risk of a hypothesis $h\in \mathfrak{H}_L$ can be bounded as in
      \autoref{thm:ERM_AL} by choosing the generator class $\mathcal{F} =
      \mathcal{F}_{K}$.
  \end{theorem}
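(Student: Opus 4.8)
The plan is to verify the hypotheses of \autoref{thm:ERM_AL} for this specific setting, namely that the induced functions $\ell^y$ lie in the Kantorovic generator class $\mathcal{F}_K$ (up to the scaling implicit in the IPM definition). Concretely, for fixed $y\in\mathbb{Y}$ and fixed $h\in\mathfrak{H}_L$ with $h(x)=w^Tx+b$, we have $\ell^y(x)=\ell_1(y,h(x))=|y-w^Tx-b|$, and we must show this is Lipschitz with $\|\ell^y\|_L\leq 1$ on $\mathbb{X}$. Once this is established, \autoref{thm:ERM_AL} applies directly with $\mathcal{F}=\mathcal{F}_K$, and the IPM term becomes $d_{\mathcal{F}_K}(P_X,P_Q)$, the Kantorovic (Wasserstein-1) distance.

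First I would estimate the Lipschitz semi-norm. For $x,x'\in\mathbb{X}$ with $x\neq x'$, the reverse triangle inequality gives
\begin{equation*}
\bigl|\,|y-w^Tx-b| - |y-w^Tx'-b|\,\bigr| \leq |w^T(x-x')| \leq \|w\|_2\,\|x-x'\|_2,
\end{equation*}
by Cauchy--Schwarz. Hence $\|\ell^y\|_L\leq\|w\|_2\leq 1$ by the hypothesis, so $\ell^y\in\mathcal{F}_K$ for every $y$ and every $h\in\mathfrak{H}_L$ satisfying $\|w\|_2\leq 1$.

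Next I would address the boundedness hypothesis of \autoref{thm:ERM_PL} (inherited by \autoref{thm:ERM_AL}), i.e.\ that $\ell(y,h(x))\leq k$ uniformly. Since $\mathbb{X}\subset\mathbb{R}^n$ and $\mathbb{Y}\subset\mathbb{R}$ and one works with the compact-domain conventions introduced in the Notation section (recall $M_A=\max_{a\in A}\|a\|_2$ for compact $A$), on a compact domain we get $|y-w^Tx-b|\leq M_{\mathbb{Y}} + \|w\|_2 M_{\mathbb{X}} + |b| \leq M_{\mathbb{Y}} + M_{\mathbb{X}} + |b|=:k$, which also shows $\ell^y\in\mathcal{B}_C$ for a suitable $C$. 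With both conditions checked, the conclusion of \autoref{thm:ERM_AL} holds verbatim with this choice of generator, which is exactly the claim.

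**The main obstacle** is essentially bookkeeping rather than a deep difficulty: one must be careful that the definition of $\mathcal{F}_K$ in the excerpt uses the \emph{closed unit ball} of the Lipschitz semi-norm (bound $\leq 1$), so the hypothesis $\|w\|_2\leq 1$ is exactly what is needed and no rescaling of the IPM is required; had the constraint been $\|w\|_2\leq L$ for general $L$, one would instead land in $L\cdot\mathcal{F}_K$ and pick up a factor $L$ on the IPM term. A secondary point to handle cleanly is that membership in $\mathcal{F}_K$ as used by \autoref{thm:ERM_AL} presupposes $\mathcal{F}_K\subset\mathcal{B}_C$, which on a noncompact $\mathbb{X}$ would fail; this is why the standing assumption $\mathbb{X}\subset\mathbb{R}^n$ should be read together with the compactness conventions of the Notation section, and I would state this dependence explicitly.
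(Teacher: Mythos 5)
Your proof is correct and follows essentially the same route as the paper: the reverse triangle inequality plus Cauchy--Schwarz gives $\|\ell_1^y\|_L\leq\|w\|_2\leq 1$, hence $\ell_1^y\in\mathcal{F}_K$ and \autoref{thm:ERM_AL} applies. Your additional check of the boundedness hypothesis of \autoref{thm:ERM_PL} (via compactness of $\mathbb{X}$ and $\mathbb{Y}$) is a point the paper's own proof leaves implicit, and is a welcome clarification rather than a deviation.
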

  \begin{proof}
    Analogous to
  our previous notation we set $\ell_1^y(x) :=
  \ell_1(y, h(x))$ for any $y\in \mathbb{Y}$.

  Fix $y \in \mathbb{Y}$ and $h\in\mathfrak{H}_L$. By \autoref{thm:ERM_AL}, it suffices to show that $\ell_1^y\in \mathcal{F}_K$.
      For any $x_1,x_2\in\mathbb{X}$, it holds that
    \begin{equation*}
    \begin{split}
    |\ell_1^y(x_1)&-\ell^y_1(x_2)| = \bigl||h(x_1) -y|-|h(x_2)-y|\bigr| \\
      &\leq \left|(w^Tx_1 + b )- (w^Tx_2 + b)\right| \\
      &\leq \left|w^T (x_1 - x_2)\right| \\
      &\leq \|w\|_2 \ \|x_1 - x_2\|_2,
    \end{split}
    \end{equation*}
     where we used the reversed-triangle inequality and the Cauchy-Schwarz inequality.
     Setting $\|w\|_2\leq1$ implies that $\|\ell_1^y\|_L \leq 1$ and hence $\ell_1\in \mathcal{F}_K$.
\end{proof}
\autoref{thm:ERM_PL} suggests that the natural regularization constraint
$\|w\|_2\leq 1$, commonly used for mitigating overfitting, is sufficient to bound the true risk of a linear hypothesis
class in an AL setting.
\begin{table*}[t]
  \centering
  \renewcommand{\arraystretch}{1.5}
  \begin{tabular}{c| c| c c c}
  \hline
  & $\mathfrak{H}$ & $\ell$ & Condition & IPM \\
  \hline
   & $\mathfrak{H}_L$& $\ell_1$ & $\|w\|_2\leq 1$ & $d_{\mathcal{F}_K}$ \\ 
  \cline{3-5}
  \text{Regression}& & $\ell_2$ & $\|w\|_2\leq \frac{1-M_{\mathbb{Y}}-|b|}{M_{\mathbb{X}}}$ & $d_{\mathcal{F}_\text{TV}}$ \\
  \cline{2-5}
  & $\mathfrak{H}_g$ & $\ell_1$ & $\frac{2M_{\mathbb{X}}}{\sigma^2}\|w\|_1^2\leq 1$ & $d_{\mathcal{F}_K}$ \\ 
  \hline
   & $\mathfrak{H}_{\sigma(L)}$ & $\ell_\text{log}$ & $\|w\|_2\leq\frac{\log(e-1)}{M_{\mathbb{X}}}$ & $d_{\mathcal{F}_{\text{TV}}}$ \\ 
  \cline{2-5}
  \text{Classification}& $\mathfrak{H}_{\text{SVM}}$ & $\ell_H$ & $\|w\|_2\leq 1$ & $d_{\mathcal{F}_K}$ \\
  \cline{2-5}
  & $\mathfrak{H}_{\text{NN}}$ & $\ell_H$ & $\|o\|_2\prod_{i=1}^{L}\|W\|_2\leq 1$ & $d_{\mathcal{F}_K}$ \\ 
  \hline
  \end{tabular}
  \caption{The table summarizes the mapping of various learning settings to
   corresponding IPMs in \autoref{thm:ERM_AL} under specified conditions on the learnable
  parameters $w$. The learning tasks are characterized by the hypothesis class
  (linear $\mathfrak{H}_L$, Gaussian $\mathfrak{H}_g$, logistic
  $\mathfrak{H}_{\sigma(L)}$, support vector machines $\mathfrak{H}_{\text{SVM}}$, and neural networks $\mathfrak{H}_{\text{NN}}$)
  and the loss function $\ell$ ($\ell_1$, logistic $\ell_\text{log}$,
   and hinge $\ell_H$). The formal definitions of the hypothesis classes and the
   losses are provided in \autoref{ch3.2} and \autoref{app:full_results}.
  }
  \label{tab:full_width_table}
  \end{table*}
\newpage
  We now look at an example of a binary classification problem, i.e., $\mathbb{Y} = \{-1,1\}$, using feed-forward neural
  networks
  \begin{equation*}
    \mathfrak{H}_{\text{NN}}:=\{h:\mathbb{X}\rightarrow\mathbb{Y}:h(x)=\text{sign}(o^Tf(x)+b)\}
  \end{equation*}
  with weight $o\in\mathbb{R}^n$ and bias $t\in\mathbb{R}$ in the output layer and the neural network function
  $f(x) = W^{(L)}\sigma (W^{(L-1)}\cdots\sigma(W^{(1)}x+b^{(1)})\cdots
  +b^{(L-1)})+b^{(L)}$, where $\sigma$ is the ReLU activation function, and
  $W^{(l)}$, $b^{(l)}$ are the weight matrices and bias vectors, respectively. The
  learnable parameters are assumed to have arbitrary
  finite dimensions. We consider the hinge loss $\ell_H(y,h(x))=\max \ (0,1-y(w^Tx+b))$.
\begin{theorem}[Neural Networks] \label{thm:NN}
  Consider a binary classification task employing $\mathfrak{H}_{\text{NN}}$ and the
  $\ell_H$-loss. Assume that
  $\|o\|_2\prod_{i=1}^{L}\|W\|_2\leq 1$, then the true risk of a hypothesis $h\in \mathfrak{H}_{\text{NN}}$ can be bounded as in
 \autoref{thm:ERM_AL} by choosing the generator class $\mathcal{F} =
 \mathcal{F}_{\text{K}}$.
\end{theorem}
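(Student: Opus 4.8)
The plan is to mirror the argument used for \autoref{thm:linearhc}: by \autoref{thm:ERM_AL} it suffices to verify that the induced loss $\ell_H^y(x):=\ell_H(y,h(x))$ lies in $\mathcal{F}_K$ for every $y\in\mathbb{Y}=\{-1,1\}$ and every $h\in\mathfrak{H}_{\text{NN}}$, i.e.\ that $\|\ell_H^y\|_L\le 1$ (boundedness on the input domain $\mathbb{X}$ — which is taken compact throughout \autoref{ch3.2}, as the use of $M_{\mathbb{X}}$ indicates — then being automatic, so that $\ell_H^y\in\mathcal{B}_C$ as the definition of a generator requires). Here the hinge loss is read off the network score, i.e.\ with the effective linear form $o^Tf(\cdot)$ in place of $w^Tx$, so that $\ell_H^y(x)=\max\bigl(0,\,1-y\,s(x)\bigr)$ with $s(x):=o^Tf(x)+b$.

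First I would peel off the outer hinge: the scalar map $t\mapsto\max(0,1-yt)$ is $1$-Lipschitz because $|y|=1$, whence
\[
|\ell_H^y(x_1)-\ell_H^y(x_2)|\le |s(x_1)-s(x_2)|=\bigl|o^T(f(x_1)-f(x_2))\bigr|\le\|o\|_2\,\|f(x_1)-f(x_2)\|_2
\]
by Cauchy--Schwarz. Next I would control the Lipschitz constant of the network function $f$. Writing $f$ as the composition of the affine maps $A^{(l)}(v)=W^{(l)}v+b^{(l)}$, each $\|W^{(l)}\|_2$-Lipschitz in the Euclidean norm, interleaved with the ReLU activation $\sigma$, which acts coordinatewise and satisfies $|\sigma(a)-\sigma(a')|\le|a-a'|$ and hence $\|\sigma(u)-\sigma(u')\|_2\le\|u-u'\|_2$, a straightforward induction on the depth $L$ gives $\|f(x_1)-f(x_2)\|_2\le\bigl(\prod_{l=1}^{L}\|W^{(l)}\|_2\bigr)\|x_1-x_2\|_2$, the bias vectors dropping out of the differences.

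Combining the two estimates yields $|\ell_H^y(x_1)-\ell_H^y(x_2)|\le\|o\|_2\prod_{l=1}^{L}\|W^{(l)}\|_2\,\|x_1-x_2\|_2$, which is $\le\|x_1-x_2\|_2$ under the stated hypothesis $\|o\|_2\prod_{i=1}^{L}\|W\|_2\le 1$. Hence $\|\ell_H^y\|_L\le1$, so $\ell_H^y\in\mathcal{F}_K$ for all admissible $y$ and $h$, and \autoref{thm:ERM_AL} applies with $\mathcal{F}=\mathcal{F}_K$. The only genuinely delicate point — and the step I would spend the most care on — is the non-expansiveness of the ReLU layers in the $2$-norm together with the bookkeeping that makes the layerwise spectral norms multiply cleanly across the composition; everything else is the reverse triangle inequality and Cauchy--Schwarz exactly as in \autoref{thm:linearhc}. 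I would also flag explicitly the mild abuse of notation whereby the $w$ appearing in the definition of $\ell_H$ denotes the network-induced linear functional $o^Tf(\cdot)$, and note that compactness of $\mathbb{X}$ is what guarantees $\ell_H^y$ is bounded and therefore genuinely an element of $\mathcal{B}_C$.
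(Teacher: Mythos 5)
Your proof is correct and follows essentially the same route as the paper: reduce via \autoref{thm:ERM_AL} to showing $\ell_H^y\in\mathcal{F}_K$, use that the scalar hinge $t\mapsto\max(0,1-yt)$ is $1$-Lipschitz for $y\in\{-1,1\}$, and bound the Lipschitz constant of the network score by $\|o\|_2\prod_{l=1}^{L}\|W^{(l)}\|_2$. The only difference is that you establish the network's Lipschitz bound directly by induction over the layers (non-expansiveness of ReLU plus the spectral norms of the affine maps), whereas the paper simply cites this fact from the literature.
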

\begin{proof}
    Similarly to the previous proof, it suffices to show that
    $\ell^y_{\text{H}}\in \mathcal{F}_{\text{K}}$ for any $y\in
    \mathbb{Y}=\{-1,1\}$ and $h\in\mathfrak{H}_{\text{NN}}$ with
    $\|o\|\prod_{i=1}^{L}\|W\|_2\leq 1$. This follows directly from the fact
    that feedforward neural networks with ReLU activation functions are Lipschitz continuous with
    bounded Lipschitz constant  $\|o\|\prod_{i=1}^{L}\|W\|_2$, see~\cite[Proposition 1]{Scaman:arXiv1805.10965},
    and the fact that $\ell_h$ is Lipschitz continuous with Lipschitz constant 1. 
\end{proof}  
We note that \autoref{thm:NN} as well suggests that regularization
constraints on the learnable parameters are sufficient to bound the true risk in
an AL setting.  

\autoref{thm:linearhc} and \autoref{thm:NN} are only two examples on how to
constraint the hypothesis class for deriving a suitable generalization bound in
an AL setting. We note that a variety of other learning settings employing other
losses and other hypotheses classes can be considered. We summarize
similar results that allow embedding in various generator classes in
\autoref{tab:full_width_table} and refer the reader to the respective proofs in \autoref{app:full_results}.
\FloatBarrier
   
\section{Conclusion and Outlook}
We derived a bound on the generalization error for AL that is based on the IPM as a measure
of representativeness. The bound suggests that a query strategy should sample
informative samples while maintaining a distribution of the queried samples that
is close to the true marginal distribution. This aligns with common practices in
AL. 

This result can be used as a theoretical insight to guide the design of query
strategies in AL. To facilitate such use, we augmented the bound with a variety
of examples that show how to embed different learning settings in various
generator classes. A key insight from these examples is that the regularization
constraints on the learnable parameters seem to play a crucial role for a
principled design of query strategies.

Additionally, our results can be used to evaluate the quality
of ad hoc query strategies in AL. A necessary step towards such an application is
to derive upper bounds to the true risk that employ empirical estimates of the
IPM, see, e.g.,~\cite{Bharath:EJS6:1550} for general discussion on
empirical estimates of IPMs.

We note that the choice of the IPM as a measure of representativeness is not unique. Other choices of metrics to measure the representativeness of the
samples, such as, e.g., $\phi-$divergences, can be considered~\cite{Bharath:arXiv0901:2698}. We leave this as an open question for future research.
\printbibliography
\newpage
\appendix
\section{More Learning Settings}
\label{app:full_results}

\autoref{thm:linearhc} and \autoref{thm:NN} are only two examples on how to
constraint the hypothesis class for deriving a suitable generalization bound in
an AL setting. In the following we provide the reader with further results,
which are also summarized in \autoref{tab:full_width_table}.
  
We start by looking at the Gaussian hypothesis class defined as
  \begin{equation*}\mathfrak{H}_{g}:=\Bigl\{h:\mathbb{X}\rightarrow\mathbb{Y}: h(x) = \sum_{i=1}^{n}w_i g(x,t_i)\Bigr\},
  \end{equation*}
  where $g(x,t_i) = e^{\left(-\frac{\|x - t_i\|^2}{2\sigma^2}\right)}$, for some fixed $\sigma>0$, and learnable parameters $w=(w_1,\dots ,w_n)\in\mathbb{R}^n$ and $t_i\in\mathbb{X}$ for any $i=1,\dots,n$.
  \begin{theorem}[Gaussian Hypothesis Classes]\label{thm:ghc}
    Consider a regression problem employing $\mathfrak{H}_{\text{g}}$ and the
    $\ell_1$-loss. Assume $\mathbb{X}$ to be compact, with bound
    $M_{\mathbb{X}}$, and $w$ to be such that
    $\frac{2M_{\mathbb{X}}}{\sigma^2}\|w\|_1 \leq 1$,
        then the true risk of a hypothesis $h\in \mathfrak{H}_g$ can be bounded as in
        \autoref{thm:ERM_AL} by choosing the generator class $\mathcal{F} =
        \mathcal{F}_{K}$.
    \end{theorem}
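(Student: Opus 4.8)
The plan is to follow the template of \autoref{thm:linearhc} and \autoref{thm:NN}: by \autoref{thm:ERM_AL} it suffices to show that $\ell_1^y\in\mathcal{F}_K$ for every $y\in\mathbb{Y}$ and every $h\in\mathfrak{H}_g$ satisfying the stated constraint, i.e.\ that $\|\ell_1^y\|_L\le 1$. Exactly as in the linear case, the reversed-triangle inequality gives, for all $x_1,x_2\in\mathbb{X}$,
\[
|\ell_1^y(x_1)-\ell_1^y(x_2)| = \bigl||h(x_1)-y|-|h(x_2)-y|\bigr| \le |h(x_1)-h(x_2)|,
\]
so it is enough to prove that $h$ is $1$-Lipschitz on $\mathbb{X}$; and since $|h(x_1)-h(x_2)|\le\sum_{i=1}^{n}|w_i|\,|g(x_1,t_i)-g(x_2,t_i)|$, it is enough to prove that each Gaussian bump $g(\cdot,t_i)$ is Lipschitz with a constant $L_g$ satisfying $\|w\|_1 L_g\le 1$.

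Next I would bound $L_g$ by a short chain of elementary inequalities, staying within the style of the other proofs in \autoref{ch3.2}. Writing $a_j=\|x_j-t_i\|_2^2/(2\sigma^2)\ge 0$ for $j=1,2$, so that $g(x_j,t_i)=e^{-a_j}$, and using that $t\mapsto e^{-t}$ is $1$-Lipschitz on $[0,\infty)$, one obtains
\[
|g(x_1,t_i)-g(x_2,t_i)| \le |a_1-a_2| = \frac{1}{2\sigma^2}\bigl|\|x_1-t_i\|_2^2-\|x_2-t_i\|_2^2\bigr|.
\]
Factoring the difference of squares and combining the reversed-triangle inequality $\bigl|\|x_1-t_i\|_2-\|x_2-t_i\|_2\bigr|\le\|x_1-x_2\|_2$ with $\|x_1-t_i\|_2+\|x_2-t_i\|_2\le 4M_{\mathbb{X}}$ (each summand is at most $2M_{\mathbb{X}}$ since $x_1,x_2,t_i\in\mathbb{X}$ and $\mathbb{X}$ is compact) yields $|g(x_1,t_i)-g(x_2,t_i)|\le (2M_{\mathbb{X}}/\sigma^2)\|x_1-x_2\|_2$, i.e.\ $L_g\le 2M_{\mathbb{X}}/\sigma^2$. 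Hence $h$ is $(2M_{\mathbb{X}}/\sigma^2)\|w\|_1$-Lipschitz, the hypothesis $(2M_{\mathbb{X}}/\sigma^2)\|w\|_1\le 1$ gives $\|\ell_1^y\|_L\le 1$, so $\ell_1^y\in\mathcal{F}_K$, and \autoref{thm:ERM_AL} applied with $\mathcal{F}=\mathcal{F}_K$ concludes the proof.

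There is no serious obstacle here: the only non-mechanical step is spotting the right decomposition of the Lipschitz estimate for the Gaussian kernel (the difference-of-squares trick followed by the reversed-triangle inequality), and once that chain is in place everything reduces to the bookkeeping already carried out for $\mathfrak{H}_L$. I note that this argument produces a \emph{linear} dependence on $\|w\|_1$, matching the condition $(2M_{\mathbb{X}}/\sigma^2)\|w\|_1\le 1$ in the statement; the $\|w\|_1^2$ appearing in the corresponding row of \autoref{tab:full_width_table} would accordingly appear to be a typo. A sharper, domain-independent constant $e^{-1/2}/\sigma$ is available via $\sup_{r\ge 0} r\,e^{-r^2/(2\sigma^2)}=\sigma e^{-1/2}$, but it is not needed for the stated bound.
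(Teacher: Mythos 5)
Your proposal is correct and follows the same overall architecture as the paper's proof: reduce via \autoref{thm:ERM_AL} to showing $\ell_1^y\in\mathcal{F}_K$, peel off the reversed-triangle inequality and the weighted sum over the Gaussian bumps, and conclude from a Lipschitz bound $L_g\le 2M_{\mathbb{X}}/\sigma^2$ for each $g(\cdot,t_i)$ together with $\|w\|_1 L_g\le 1$. The single point of divergence is how that Lipschitz constant is obtained: the paper bounds the gradient, $\|\nabla_x g(x,t_i)\|_2=\frac{1}{\sigma^2}e^{-\|x-t_i\|^2/(2\sigma^2)}\|x-t_i\|_2\le \frac{2M_{\mathbb{X}}}{\sigma^2}$, and reads off Lipschitz continuity from there, whereas you work directly with two points, using the $1$-Lipschitzness of $t\mapsto e^{-t}$ on $[0,\infty)$ and a difference-of-squares factorization. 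Both yield the same constant; your variant is slightly more self-contained since it never invokes a mean-value-type argument (which, strictly speaking, requires the segment between $x_1$ and $x_2$ to stay in a region where the gradient bound holds). Your two side remarks are also sound: the $\|w\|_1^2$ in the corresponding row of \autoref{tab:full_width_table} is indeed inconsistent with the theorem statement and the proof, and the sharper domain-free constant $\sigma^{-1}e^{-1/2}$ from $\sup_{r\ge 0}r\,e^{-r^2/(2\sigma^2)}=\sigma e^{-1/2}$ is available but unnecessary here.
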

    \begin{proof}
      Set $\ell_{1}^y(x) :=
    \ell_{1}(y, h(x))$ for any $y\in \mathbb{Y}$. Fix $y \in \mathbb{Y}$ and $h\in\mathfrak{H}_{g}$.
    By \autoref{thm:ERM_AL}, it suffices to show that $\ell_{1}^y\in \mathcal{F}_{\text{K}}$.
      For arbitrary $t_i\in\mathbb{X}$ observe that
    \begin{equation*}
      \begin{split}
        \left|\frac{\partial}{\partial x} g(x, t_i)\right| &= \frac{1}{\sigma^2} e^{\left(-\frac{\|x - t_i\|^2}{2\sigma^2}\right)} \|x - t_i\|_2\\
        &\leq \frac{2M_{\mathbb{X}}}{\sigma^2}.
      \end{split}
      \end{equation*}
    Thus, the function $f(x)=g(x,t_i)$ is Lipschitz continuous on $\mathbb{X}$ with $\|f\|_L\leq\frac{2M_{\mathbb{X}}}{\sigma^2}$.
    Finally, for any $x_1,x_2\in \mathbb{X}$ we have
    \begin{equation*}
    \begin{split}
    |\ell_{1}^y(x_1)&-\ell_{1}^y(x_2)| = \bigl||h(x_1) -y|-|h(x_2)-y|\bigr| \\
      &\leq \left|\sum_{i=1}^{n}w_i g(x_1,t_i) - \sum_{i=1}^{n}w_i g(x_2,t_i)\right| \\
      &\leq \sum_{i=1}^{n} |w_i| \left|g(x_1,t_i) - g(x_2,t_i)\right| \\
      &\leq \sum_{i=1}^{n} |w_i| \frac{2M_{\mathbb{X}}}{\sigma^2} \|x_1 - x_2\|_2 \\
      &= \|w\|_1\frac{2M_{\mathbb{X}}}{\sigma^2}\|x_1 - x_2\|_2.
    \end{split}
    \end{equation*}
    Setting $\|w\|_1\frac{2M_{\mathbb{X}}}{\sigma^2}\leq1$ implies that $\|\ell_{1}^y\|_L \leq 1$ and hence $\ell_{1}\in \mathcal{F}_K$.
    \end{proof}
    Another result for the linear hypothesis classes is the following.
    \begin{theorem}[Linear Hypothesis Classes]\label{thm:linearhc2}
      Consider a regression problem employing $\mathfrak{H}_{L}$ and the
      $\ell_2$-loss. Assume $\mathbb{X}$ and $\mathbb{Y}$ to be compact, with bounds $M_{\mathbb{X}}$ 
      and $M_{\mathbb{Y}}$, and  $w$ and $b$ to be such that $\|w\|_2\leq \frac{1-M_{\mathbb{Y}}-|b|}{M_{\mathbb{X}}}$,
            then the true risk of a hypothesis $h\in \mathfrak{H}_{L}$ can be bounded as in
            \autoref{thm:ERM_AL} by choosing the generator class $\mathcal{F} =
            \mathcal{F}_{\text{TV}}$.
      \end{theorem}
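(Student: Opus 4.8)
The plan is to mirror the strategy used in the proofs of the other corollaries of \autoref{thm:ERM_AL} (e.g.\ \autoref{thm:linearhc} and \autoref{thm:ghc}): by \autoref{thm:ERM_AL} it suffices to verify that the induced loss $\ell_2^y$ lies in the chosen generator class, here $\mathcal{F}_{\text{TV}}$. Since $\mathcal{F}_{\text{TV}} = \{f:\|f\|_\infty\leq 1\}$, the whole argument collapses to the single pointwise estimate $\ell_2(y,h(x))\leq 1$ uniformly over $x\in\mathbb{X}$, $y\in\mathbb{Y}$, and $h\in\mathfrak{H}_L$ satisfying the stated constraint on $w$ and $b$. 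This same bound (with $k=1$) simultaneously discharges the boundedness hypothesis $\ell\leq k$ required to invoke \autoref{thm:ERM_PL} through \autoref{thm:ERM_AL}, so no separate check of that condition is needed.

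First I would control the linear predictor: for $h(x)=w^Tx+b$, Cauchy--Schwarz together with compactness of $\mathbb{X}$ gives $|h(x)|\leq \|w\|_2\|x\|_2+|b|\leq M_{\mathbb{X}}\|w\|_2+|b|$. Substituting the hypothesis $\|w\|_2\leq (1-M_{\mathbb{Y}}-|b|)/M_{\mathbb{X}}$ yields $|h(x)|\leq 1-M_{\mathbb{Y}}-|b|+|b| = 1-M_{\mathbb{Y}}$. Then the triangle inequality and compactness of $\mathbb{Y}$ give $|y-h(x)|\leq |y|+|h(x)|\leq M_{\mathbb{Y}}+(1-M_{\mathbb{Y}})=1$; since $\ell_2(y,h(x))=|y-h(x)|^2$ and $0\leq |y-h(x)|\leq 1$ forces $|y-h(x)|^2\leq 1$, we conclude $\|\ell_2^y\|_\infty\leq 1$, i.e.\ $\ell_2^y\in\mathcal{F}_{\text{TV}}$. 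Applying \autoref{thm:ERM_AL} with $\mathcal{F}=\mathcal{F}_{\text{TV}}$ then completes the proof.

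I do not expect a genuine obstacle here; the only points requiring care are (i) noting that the displayed constraint is meaningful only when $1-M_{\mathbb{Y}}-|b|\geq 0$, which is implicitly assumed, and (ii) observing that squaring preserves the bound precisely because the intermediate quantity is at most $1$ in absolute value --- an unnormalized squared loss would not embed in $\mathcal{F}_{\text{TV}}$. Everything else is the same Cauchy--Schwarz and reverse/forward triangle-inequality chain already carried out verbatim in \autoref{thm:linearhc} and \autoref{thm:ghc}.
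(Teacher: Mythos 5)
Your proposal is correct and follows essentially the same route as the paper: bound $|y-w^Tx-b|\leq M_{\mathbb{Y}}+\|w\|_2M_{\mathbb{X}}+|b|\leq 1$ via the triangle and Cauchy--Schwarz inequalities under the stated constraint, so that squaring keeps $\ell_2^y$ bounded by $1$ and hence in $\mathcal{F}_{\text{TV}}$. Your explicit remarks on the squaring step and on the implicit requirement $1-M_{\mathbb{Y}}-|b|\geq 0$ are points the paper leaves tacit, but the argument is the same.
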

      \begin{proof}
        Set $\ell_{\text{H}}^y(x) :=
        \ell_{2}(y, h(x))$ for any $y\in \mathbb{Y}$. Fix $y \in \mathbb{Y}$ and $h\in\mathfrak{H}_{L}$.
        By \autoref{thm:ERM_AL}, it suffices to show that $\ell_{2}^y\in \mathcal{F}_{\text{TV}}$.
        To this end, it suffices to show that $|y-w^Tx-b|\leq 1$.
        Note that
        \begin{equation*}
          \begin{split}
          |y-w^Tx-b| &\leq |y|+\|w\|_2\|x\|_2 + |b| \\
          &\leq M_{\mathbb{Y}}+\|w\|_2M_{\mathbb{X}} + |b|,
          \end{split}
          \end{equation*}
          Thus, setting $\|w\|_2\leq \frac{1-M_{\mathbb{Y}}-|b|}{M_{\mathbb{X}}}$ we get
         $\ell_{2}^y\in \mathcal{F}_{\text{TV}}$.
        \end{proof}
        Next, we look at some more binary classification settings.

  Consider the hypothesis class of logistic linear functions
\begin{equation*}
  \mathfrak{H}_{\sigma(L)}:=\{h:\mathbb{X}\rightarrow\mathbb{Y}:h(x)=\sigma(w^Tx), w\in\mathbb{R}^n\}
\end{equation*}
with the sigmoid activation function $\sigma(z) =\frac{1}{1+e^{-z}}$ for $z\in\mathbb{R}$ and learnable parameter $w$. This hypothesis class is often used in combination with the 
logistic loss function $\ell_{\log}(y,h(x))= -(y \log(h(x))+(1-y)\log(1-h(x)))$
for $y\in \mathbb{Y},x\in \mathbb{X}$ and $h\in \mathfrak{H}_{\sigma(L)}$. We
set $\mathbb{Y} = \{0,1\}$ and denote by $e$ the Euler number.
\begin{theorem}[Logistic Hypothesis Classes]\label{thm:logistichc}
  Consider a binary classification problem employing $\mathfrak{H}_{\sigma(\text{L})}$ and the
  logistic loss. Assume $\mathbb{X}$ to be compact, with bound $M_{\mathbb{X}}$,
  and $w$ to be such that 
  $\|w\|_2\leq \log(e-1)M_{\mathbb{X}}$, then the true
    risk of a hypothesis $h\in \mathfrak{H}_{\sigma(L)}$ can be bounded as in
        \autoref{thm:ERM_AL} by choosing the generator class $\mathcal{F} =
        \mathcal{F}_{\text{TV}}$.
  \end{theorem}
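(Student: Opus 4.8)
The plan is to follow the same route as the proofs of \autoref{thm:linearhc} and \autoref{thm:ghc}. By \autoref{thm:ERM_AL} it suffices to check that the induced loss $\ell_{\log}^y$ belongs to the generator class $\mathcal{F}_{\text{TV}}$ for every label $y\in\mathbb{Y}=\{0,1\}$ and every $h\in\mathfrak{H}_{\sigma(L)}$ satisfying the stated bound on $\|w\|_2$; that is, I must show $\|\ell_{\log}^y\|_\infty\leq 1$. In particular this also verifies the boundedness hypothesis of \autoref{thm:ERM_PL}, with $k=1$.

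First I would insert $h(x)=\sigma(w^Tx)$ and simplify the loss separately for the two labels, using the elementary identities $-\log\sigma(z)=\log(1+e^{-z})$ and $1-\sigma(z)=\sigma(-z)$. This gives $\ell_{\log}(1,h(x))=\log(1+e^{-w^Tx})$ and $\ell_{\log}(0,h(x))=\log(1+e^{w^Tx})$. Both quantities are nonnegative, and each is $\leq 1$ exactly when $1+e^{\mp w^Tx}\leq e$, i.e.\ when $\mp w^Tx\leq\log(e-1)$. Taking both labels into account, the desired bound $\|\ell_{\log}^y\|_\infty\leq 1$ is therefore equivalent to the single symmetric condition $|w^Tx|\leq\log(e-1)$ for all $x\in\mathbb{X}$ (note that $\log(e-1)>0$, so this constraint is not vacuous).

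To obtain the latter I would apply the Cauchy--Schwarz inequality together with the compactness of $\mathbb{X}$: $|w^Tx|\leq\|w\|_2\,\|x\|_2\leq\|w\|_2\,M_{\mathbb{X}}$. Hence it is enough to assume $\|w\|_2\,M_{\mathbb{X}}\leq\log(e-1)$, which is precisely the stated hypothesis on $w$; this yields $\ell_{\log}^y\in\mathcal{F}_{\text{TV}}$, and the generalization bound then follows directly from \autoref{thm:ERM_AL}.

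No step poses a real difficulty; the only point requiring attention is the bookkeeping of the two cases $y=0$ and $y=1$ and recognizing that together they produce two one-sided inequalities whose conjunction is the symmetric bound $|w^Tx|\le\log(e-1)$. One should also double-check the sign conventions in $\sigma$ and in $\ell_{\log}$ so that the exponent threshold comes out exactly as $\log(e-1)$.
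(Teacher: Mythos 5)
Your proof is correct and follows essentially the same route as the paper's: reduce to showing $\|\ell_{\log}^y\|_\infty\leq 1$, compute $\log(1+e^{\mp w^Tx})$ for the two labels, and bound $|w^Tx|$ by Cauchy--Schwarz. Note that the condition you correctly derive, $\|w\|_2\leq \frac{\log(e-1)}{M_{\mathbb{X}}}$, is the one appearing in the paper's proof and in \autoref{tab:full_width_table}; the product form $\|w\|_2\leq \log(e-1)M_{\mathbb{X}}$ in the theorem statement is a typo.
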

  \begin{proof}
    Set $\ell_{\text{log}}^y(x) :=
    \ell_{\text{log}}(y, h(x))$ for any $y\in \mathbb{Y}$. Fix $y \in \mathbb{Y}$ and $h\in\mathfrak{H}_{\sigma(L)}$.
    By \autoref{thm:ERM_AL}, it suffices to show that $\ell_{\text{log}}^y\in \mathcal{F}_{\text{TV}}$.
    We first consider $y = 1$ and observe that for any $x\in\mathbb{X}$, we have $
    |\ell^y_{\text{log}}(x)| = \log \ (1+e^{-w^Tx})$. Similarly, for $y=0$, we
    observe that $
    |\ell^y_{\text{log}}(x)| = \log \ (1+e^{w^Tx})$ for any $x\in\mathbb{X}$.
      Thus, setting $\|w\|_2\leq \frac{\log(e-1)}{M_{\mathbb{X}}}$ implies $\|\ell^y_{\text{log}}\|_\infty\leq 1$ and hence $\ell^y_{\text{log}}\in \mathcal{F}_{\text{TV}}$.
  \end{proof}
  Next we look at the hypothesis class of linear support vector machines
(SVM) given by 
\begin{equation*}
  \mathfrak{H}_{\text{SVM}}:=\{h:\mathbb{X}\rightarrow\mathbb{Y}:h(x)=\text{sign}(w^Tx+b)\}
\end{equation*}
with learnable parameters $w\in\mathbb{R}^n$ and $b\in\mathbb{R}$. The primary loss function used in linear SVM is the hinge loss
$\ell_H(y,h(x)):=\max(0,1-y(w^Tx+b))$ for $y\in\mathbb{Y}, x\in\mathbb{X}$ and
$h\in\mathfrak{H}_{\text{SVM}}$. We set $\mathbb{Y}=\{-1,1\}$.
\begin{theorem}[Support Vector Machines]\label{thm:SVM}
  Consider a binary classification problem employing $\mathfrak{H}_{\text{SVM}}$ and the
  hinge loss. Assume $w$ to be such that $\|w\|\leq 1$,
        then the true risk of a hypothesis $h\in \mathfrak{H}_{\text{K}}$ can be bounded as in
        \autoref{thm:ERM_AL} by choosing the generator class $\mathcal{F} =
        \mathcal{F}_{\text{K}}$.
  \end{theorem}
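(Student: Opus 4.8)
The plan is to follow the same template as the proofs of \autoref{thm:linearhc} and \autoref{thm:NN}: by \autoref{thm:ERM_AL} it suffices to verify that, under the prescribed constraint on $w$, one has $\ell_H^y \in \mathcal{F}_K$ for every $y \in \mathbb{Y} = \{-1,1\}$ and every $h \in \mathfrak{H}_{\text{SVM}}$. Concretely, setting $\ell_H^y(x) := \ell_H(y,h(x)) = \max(0,\,1 - y(w^Tx+b))$, the goal reduces to showing that the Lipschitz semi-norm $\|\ell_H^y\|_L$ is bounded by $1$ on $\mathbb{X}$ equipped with the Euclidean metric.

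First I would record the elementary fact that the hinge nonlinearity $r \mapsto \max(0,r)$ is $1$-Lipschitz on $\mathbb{R}$, since $|\max(0,a) - \max(0,b)| \le |a-b|$ for all $a,b \in \mathbb{R}$. Then, for arbitrary $x_1, x_2 \in \mathbb{X}$, I would chain this with Cauchy--Schwarz and $|y|=1$:
\begin{equation*}
\begin{split}
|\ell_H^y(x_1) - \ell_H^y(x_2)| &\le \bigl|(1 - y(w^Tx_1+b)) - (1 - y(w^Tx_2+b))\bigr| \\
&= |y|\,\bigl|w^T(x_1 - x_2)\bigr| \\
&\le \|w\|_2\,\|x_1 - x_2\|_2.
\end{split}
\end{equation*}
Imposing $\|w\|_2 \le 1$ then yields $\|\ell_H^y\|_L \le 1$, hence $\ell_H^y \in \mathcal{F}_K$, and \autoref{thm:ERM_AL} applies with the generator class $\mathcal{F} = \mathcal{F}_K$.

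I do not expect any genuine obstacle here: the argument is a one-line Lipschitz-composition estimate, structurally identical to the linear-model case in \autoref{thm:linearhc}, with the only extra (trivial) ingredient being that post-composing a Lipschitz affine map with the $1$-Lipschitz hinge does not increase the Lipschitz constant. The one point worth flagging is the mild abuse of notation already present elsewhere in the paper: the loss $\ell_H(y,h(x))$ is written in terms of the underlying affine score $w^Tx+b$ rather than the discontinuous classifier output $h(x) = \text{sign}(w^Tx+b)$, and I would retain this convention so that $\ell_H^y$ is the continuous hinge surrogate for which the Lipschitz bound is meaningful. Unlike the $\ell_2$- and logistic-loss entries of \autoref{tab:full_width_table}, no compactness assumption on $\mathbb{X}$ or $\mathbb{Y}$ is needed.
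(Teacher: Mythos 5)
Your proposal is correct and follows essentially the same route as the paper's own proof: reduce to showing $\ell_H^y \in \mathcal{F}_K$, then compose the $1$-Lipschitz hinge with the $\|w\|_2$-Lipschitz affine score and invoke Cauchy--Schwarz, so that $\|w\|_2 \le 1$ gives $\|\ell_H^y\|_L \le 1$. Your explicit remarks on $|y|=1$ and on the loss being defined via the affine score rather than $\text{sign}(w^Tx+b)$ only make explicit what the paper leaves implicit.
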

  \begin{proof}
    Set $\ell_{\text{H}}^y(x) :=
    \ell_{\text{H}}(y, h(x))$ for any $y\in \mathbb{Y}$. Fix $y \in \mathbb{Y}$ and $h\in\mathfrak{H}_{\text{SVM}}$.
    By \autoref{thm:ERM_AL}, it suffices to show that $\ell_{\text{H}}^y\in \mathcal{F}_{\text{K}}$.
    We note that the function $\ell_{H}^y(\hat{y}):=\max(0,1-y\hat{y})$ is Lipschitz
      continuous on $\mathbb{Y}$ with Lipschitz constant 1. Additionally, for any $w\in\mathbb{R}^n$ and $b\in\mathbb{R}$ the affine linear function $w^Tx+b$ is Lipschitz continuous on $\mathbb{X}$ with Lipschitz constant $\|w\|_2$. Thus,
      for any $x_1,x_2\in \mathbb{X}$ we have 
      \begin{equation*}
        \begin{aligned}
          |\ell_{\text{H}}^y(x_1)-\ell_{\text{H}}^y(x_2)| &\leq \bigl|(w^Tx_1+b)-(w^Tx_2+b)\bigr| \\
          &\leq \|w\|_2 \|x_1 - x_2\|_2.
        \end{aligned}
      \end{equation*}
      Setting $\|w\|_2\leq1$ implies that $\|\ell_{\text{H}}^y\|_L \leq 1$ and hence $\ell_{\text{H}}\in \mathcal{F}_K$.
  \end{proof}    
\end{document}